\documentclass[10pt,letterpaper]{article}
\usepackage{amssymb,amsthm,amsmath}
\usepackage[style=numeric, backend=biber]{biblatex}
\theoremstyle{definition}
\newtheorem{theorem}{Theorem}
\newtheorem{corollary}[theorem]{Corollary}
\newtheorem*{proposition}{Proposition}
\newtheorem*{hypothesis}{Hypothesis}
\newcommand{\roc}{\operatorname{ROC}_f}
\renewcommand{\b}{\text{b}}
\renewcommand{\P}{\mathbb{P}}

\title{A Review of the Receiver Operating Characteristic Curve and a Proof About the Area Beneath It}
\author{Steven Redolfi\footnote{stevenredolfi@gmail.com}}
\begin{document}
\maketitle

\section{Introduction}
\label{intro}

The below Proposition is the subject of this paper. In Theorem \ref{formal statement}, Section \ref{main}, the Proposition is given a formal interpretation and proven. The prerequisite mathematical knowledge, reviewed in the appendix, can be found in any graduate text in mathematical analysis, e.g., Hewitt and Stromberg \cite{hewittstromberg} or Ghatasheh, Redolfi, and Weikard \cite{GRW}.

\begin{proposition}
	The area beneath the Receiver Operating Characteristic (ROC) curve generated by a binary classifier, under a suitable hypothesis, is the same as the probability that the classifier will be able to distinguish between a random positive observation and a random negative observation.
\end{proposition}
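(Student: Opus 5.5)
The plan is to formalize the statement first and then reduce it to a change of variables in a Lebesgue--Stieltjes integral.

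\emph{Setup.} Let $f$ be a real-valued score on a probability space, with $X$ the score of a random positive observation and $Y$ the score of an independent random negative observation. Let $F_1(t)=\P(X\le t)$ and $F_0(t)=\P(Y\le t)$. The classifier at threshold $t$ declares ``positive'' when the score exceeds $t$. This gives
\begin{equation*}
\operatorname{TPR}(t)=1-F_1(t), \qquad \operatorname{FPR}(t)=1-F_0(t).
\end{equation*}
The curve $\roc$ is the parametric curve $t\mapsto(\operatorname{FPR}(t),\operatorname{TPR}(t))$. The ``suitable hypothesis'' I would impose is that $F_0$ is continuous, so that $Y$ has no atoms. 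Under this hypothesis $\operatorname{FPR}$ is continuous and nonincreasing from $1$ to $0$, and it attains every value in $(0,1)$. The curve is then the graph of a function $x\mapsto \roc(x)$ on $[0,1]$, where for $x=1-F_0(t)$ we set $\roc(x)=1-F_1(t)$. To make this well defined on flat pieces of $F_0$, I would take, say, the right endpoint and check that the choice does not matter up to a null set. The target identity is
\begin{equation*}
\int_0^1 \roc(x)\,dx=\P(X>Y).
\end{equation*}

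\emph{Main computation.} By independence and Fubini,
\begin{equation*}
\P(X>Y)=\int \P(X>t)\,d\mu_0(t)=\int (1-F_1(t))\,dF_0(t),
\end{equation*}
where $\mu_0$ is the law of $Y$. The remaining task is to show that this Stieltjes integral equals the Lebesgue integral of $\roc$ over $[0,1]$. The tool is the image-measure (change of variables) theorem from the appendix. Since $F_0$ is continuous, $F_0(Y)$ is uniform on $[0,1]$, hence so is $U:=1-F_0(Y)$. It therefore suffices to show that
\begin{equation*}
1-F_1(Y)=\roc(1-F_0(Y)) \quad \text{almost surely}.
\end{equation*}
Given that, the left side of the main computation equals $\mathbb{E}[\roc(U)]=\int_0^1\roc(x)\,dx$.

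\emph{Main obstacle.} The delicate step is exactly this almost-sure identity, and it only fails on the flat stretches of $F_0$. If $F_0$ is constant on an interval $I$, then $\roc$ is ambiguous at the single value $x=1-F_0(I)$, because $F_1$ may vary across $I$ and different thresholds give different $\operatorname{TPR}$. I would handle this in two steps:
\begin{itemize}
\item $\P(Y\in I)=0$ for every interval $I$ on which $F_0$ is constant, and there are only countably many maximal such intervals. So $Y$ almost surely avoids their interiors.
\item The set of ambiguous $x$-values is countable and hence Lebesgue-null, so any choice of $\roc$ on it gives the same integral.
\end{itemize}
Finally, I would remark why the hypothesis is needed. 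If $Y$ has atoms, the ROC curve must be completed by linear interpolation, and the area then equals $\P(X>Y)+\tfrac12\P(X=Y)$ rather than $\P(X>Y)$.
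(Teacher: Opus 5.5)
\textbf{Main issue.} Your argument is sound under your own hypothesis; the gap is that it is the wrong hypothesis for the Proposition as the paper poses it. In its own terms your proof is a cleaner version of the continuous analogue the paper recounts in Section~\ref{history}, needing only continuity of $F_0$ rather than densities and differentiability. The paper's setting is different. $\Omega$ is a finite set with the uniform measure, the ROC curve is the polygon through the finitely many points $(F_f(t_k),T_f(t_k))$, and the ``suitable hypothesis'' is $f(P)\cap f(P^c)=\emptyset$. There the distribution function of the negative scores is a step function, since every negative observation is an atom, so $F_0$ is \emph{never} continuous. The step ``$F_0(Y)$ is uniform on $[0,1]$'' therefore fails, and your proof says nothing about any ROC curve built from finitely many observations. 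Atomlessness of $Y$ is also not what is needed in general. The relevant condition is $\mathbb{P}(X=Y)=0$, which is the paper's hypothesis in the finite case and holds for many $Y$ with atoms.

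\textbf{Missing idea: the atoms of the negative scores.} At an atom $s$ of the negative scores the ROC points jump horizontally. The polygon bridges the jump with the segment from $(F_f(s),T_f(s))$ to $(F_f^+(s),T_f^+(s))$. The region beneath that segment is a trapezoid of mean height $T_f^{\text{b}}(s)=\tfrac12\bigl(T_f(s)+T_f^+(s)\bigr)$, not $T_f^+(s)=\mathbb{P}(X>s)$.

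\textbf{How the paper argues.} Theorem~\ref{area underneath the curve} turns the trapezoid sum into $\int T_f^{\text{b}}\,d(-F_f)$. Tonelli turns the probability into $\int T_f^+\,d(-F_f)$. Since $d(-F_f)$ lives on $f(P^c)$ while $T_f$ jumps only on $f(P)$, the hypothesis forces $T_f^{\text{b}}=T_f^+$ $d(-F_f)$-a.e.

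\textbf{Your closing remark.} Your formula, area $=\mathbb{P}(X>Y)+\tfrac12\mathbb{P}(X=Y)$, is the content of this computation, but you assert it without proof. It also contradicts your claim that atomlessness is needed.

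\textbf{Repairing your route.} Replace $U=1-F_0(Y)$ by the randomized transform $U=1-F_0(Y^-)-V\bigl(F_0(Y)-F_0(Y^-)\bigr)$, with $V$ uniform on $[0,1]$ and independent of $Y$. This $U$ is uniform even when $Y$ has atoms. Over the stretch of the polygon coming from an atom $s$ of $Y$, the interpolated height at $U$ is $\mathbb{P}(X\ge s)-V\,\mathbb{P}(X=s)$. Its conditional mean given $Y=s$ is $\mathbb{P}(X>s)+\tfrac12\mathbb{P}(X=s)$. This yields your closing formula, and it reduces to $1-F_1(s)$ for every $V$ exactly when $\mathbb{P}(X=s)=0$, i.e.\ under the no-ties hypothesis.
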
 

The hypothesis required to ensure these quantities are equal is as follows.

\begin{hypothesis}
The classifier will assign no positive observation the same value as a negative observation.
\end{hypothesis}

If the hypothesis is broken, the area beneath the ROC curve \emph{is no longer} the probability mentioned in the Proposition. However, Corollary \ref{bound} in Section \ref{main} provides a bound on the difference between the area and the probability, and that this difference can be at most 1/2.

A literature review found the earliest claim of the Proposition in Green and Swets \cite{greenswets}, whose assumptions seem close to one of their references, Perterson, Birdsall and Fox \cite{petersonbirdsallfox}. The argument contained in \cite{greenswets} with context given in \cite{petersonbirdsallfox} is recounted in Section \ref{history}. In \cite{greenswets} different hypotheses are made, but an analogous claim is argued, somewhat informally. The argument set forth by this paper is contained in Section \ref{main} along with a simple counterexample when the Hypothesis is not met.

Assuming the Proposition is true, the area beneath an ROC curve is a useful one-number summary of the performance of a binary classifier. Simply put, a binary classifier takes in an observation and gives a number, usually between zero and one, indicating the classifier's confidence that the observation has a property or does not have a property. With this interpretation, if we denote the set of all our finitely many observations by $\Omega$, then the model can be represented by some function $f$ from the set $\Omega$ to the closed interval $[0,1]$, and the set of all observations with the property may be denoted $P$. We assume throughout this document that $\emptyset \neq P \neq \Omega$. Useful in the following is the uniform measure, $\mu$, defined on the finite set $\Omega$. That is, if $E\subset \Omega$, then $\mu(E)$ is the number of elements of $E$ divided by the number of elements in $\Omega$. For instance, $\mu(\Omega)=1$ and $\mu(\emptyset)=0$. One should relate the number $\mu(E)$ to the probability, given a random element $\omega$ in $ \Omega$, that $\omega$ lies in $ E$.

\section{Quantities From Confusion Tables}
\label{confusion table}

Given a threshold $\tau\in\mathbb{R}$, we may presume that, if $\omega\in\Omega$ is an observation, $f(\omega)\geq \tau$ indicates our function ``believes" the observation $\omega$ has the property. If $\omega\in P$, then this may be referred to as a \textbf{true positive}. If $\omega\notin P$, then this may be referred to as a \textbf{false positive}. Similarly, an element $\omega$ with $f(\omega)<\tau$ and $\omega\in P$ will be referred to as a \textbf{false negative} and with $\omega\notin P$ a \textbf{true negative}. One should note the dependence of this property on the function $f$ itself as well as the parameter $\tau$.

Motivated by these observations, we denote the \textbf{true positive rate} of the function $f$ by the function $T_f:\mathbb{R}\to[0,1]$ defined by 

\[T_f(\tau)=\frac{\mu(f^{-1}([\tau,1])\cap P)}{\mu(P)}=\mu(f^{-1}([\tau,1])|P).\]

The \textbf{false positive rate} is then a function $F_f:\mathbb{R}\to[0,1]$ defined by

\[F_f(\tau)=\frac{\mu(f^{-1}([\tau,1])\cap P^c)}{\mu(P^c)}=\mu(f^{-1}([\tau,1])|P^c).\]

One immediately proves that these functions are left-continuous, non-increasing, $T_f(t)=F_f(t)=1$ for all $t\leq 0$, $T_f(t)=F_f(t)=0$ for all $t>1$, and that $T_f$, $F_f$ have jump discontinuities on $f(P)$, $f(P^c)$ respectively.

Using the notation from the appendix, $T_f(t)=f_*(\mu_P)([t,1])$ and $F_f(t)=f_*(\mu_{P^c})([t,1])$ so that $d(-T_f)=f_*(\mu_P)$ and $d(-F_f)=f_*(\mu_{P^c})$.

\section{The ROC curve and its Corresponding Area} 
\label{main}

The function $\roc:\mathbb{R}\to[0,1]^2$ is then defined by $\roc(t)=(F_f(t),T_f(t))$. At the moment, this is not a curve in the mathematical sense, but rather $\roc(\mathbb{R})$ is a set of points in the unit square. One may associate with these points a curve by the following method: there exist finitely many real numbers $t_1,...,t_n$ such that $t_k<t_{k+1}$ for $k=1,...,n-1$ and $\roc(\mathbb{R})=\{\roc(t_k):k=1,...,n\}$. For each $k$, connect the point $\roc(t_k)$ with $\roc(t_{k+1})$ with a line segment. The curve thus generated is the \textbf{receiver operating characteristic} curve. The area beneath this curve, $A$, may be found utilizing (possibly degenerate) trapezoids as

\[A=\sum_{k=1}^{n-1}\frac{T_f(t_k)+T_f(t_{k+1})}{2}(F_f(t_k)-F_f(t_{k+1})).\]

Let us take an alternative view of $A$. Define $\tau_k =\sup \roc^{-1}(\{\roc(t_k)\})$ for $k=1,...,n-1$. By the left-continuity of $\roc$, $\roc(\tau_k)=\roc(t_k)$ and $\roc^+(\tau_k)=\roc(t_{k+1})$. We may thus rewrite the area as

\[A=\sum_{k=1}^{n-1}\frac{T_f(\tau_k)+T_f^+(\tau_k)}{2}(F_f(\tau_k)-F_f^+(\tau_k))=\int T_f^{\b}d(-F_f).\]

We have thus proven the following.

\begin{theorem}
	\label{area underneath the curve}
	The area beneath the ROC curve is exactly the Lebesgue-Stieltjes integral $\int T_f^\b d(-F_f)$.
\end{theorem}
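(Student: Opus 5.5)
The approach is to compute both sides explicitly as finite sums and match them term by term. Here $T_f^\b$ denotes the appendix's "balanced" value $T_f^\b(t)=\frac{T_f(t)+T_f^+(t)}{2}$, the average of the left-continuous value and the right limit $T_f^+(t)=\lim_{s\downarrow t}T_f(s)$.

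\textbf{Step 1: identify the measure.} Since $\Omega$ is finite, the measure $d(-F_f)=f_*(\mu_{P^c})$ is purely atomic. Its atoms lie exactly at the points $s\in f(P^c)$, with mass
\[f_*(\mu_{P^c})(\{s\})=\mu(f^{-1}(\{s\})\,|\,P^c)=F_f(s)-F_f^+(s).\]
This is the jump of $F_f$ at $s$, computed from left-continuity and the definition of $F_f$. Hence
\[\int T_f^\b\,d(-F_f)=\sum_{s\in f(P^c)}\frac{T_f(s)+T_f^+(s)}{2}\bigl(F_f(s)-F_f^+(s)\bigr).\]
We may extend this sum to any finite set of points containing $f(P^c)$, because the extra terms vanish when $F_f(s)=F_f^+(s)$.

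\textbf{Step 2: relate the atoms to the $\tau_k$.} The $t_k$ are chosen so that $\roc$ takes exactly $n$ distinct values. Each level set $\roc^{-1}(\{\roc(t_k)\})$ is an interval, because $\roc$ is monotone in each coordinate. Left-continuity gives $\roc(\tau_k)=\roc(t_k)$, and for $s$ slightly larger than $\tau_k$ we leave the level set, so $\roc^+(\tau_k)=\roc(t_{k+1})$. The key point, and the main obstacle, is to show that every discontinuity of $\roc$ occurs at one of $\tau_1,\dots,\tau_{n-1}$.

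To prove this, suppose $\roc$ jumps at some $s$. Then $\roc(s)=\roc(t_k)$ for some $k$, and $\roc^+(s)\neq\roc(s)$. So $s$ is the right endpoint of its level interval, which forces $s=\tau_k$. We also need $k\le n-1$. For $k=n$ the level set is $(1,\infty)$, where $\roc=(0,0)$. We must check that the last value is indeed $(0,0)$, attained for $t>1$, and that its supremum is $+\infty$ with no jump there.

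It follows that $f(P^c)\cup f(P)\subseteq\{\tau_1,\dots,\tau_{n-1}\}$. Every jump of $F_f$ is therefore among the $\tau_k$.

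\textbf{Step 3: match the sums.} Extend the sum in Step 1 to the set $\{\tau_1,\dots,\tau_{n-1}\}$, which contains $f(P^c)$. Substitute $T_f(\tau_k)=T_f(t_k)$, $T_f^+(\tau_k)=T_f(t_{k+1})$, and the analogous identities for $F_f$. The integral then becomes exactly the trapezoid sum defining $A$.

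Geometrically, the trapezoid formula is the area under the polygonal curve. Segments with zero horizontal extent contribute nothing, and the $F_f$-decreasing ordering guarantees the terms are nonnegative. This completes the identification $A=\int T_f^\b\,d(-F_f)$.
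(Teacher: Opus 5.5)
Your proof is correct and takes the paper's route: take $\tau_k$ as the right endpoint of the $k$-th level interval of $\roc$, use left-continuity to get $\roc(\tau_k)=\roc(t_k)$ and $\roc^+(\tau_k)=\roc(t_{k+1})$, and match the trapezoid sum term by term with the integral of $T_f^\b$ against the purely atomic measure $d(-F_f)$. You also show that every atom of $d(-F_f)$ is one of $\tau_1,\dots,\tau_{n-1}$, so the integral really is that finite sum; the paper leaves this step implicit.
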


To talk about pairs of observations $(\omega_1,\omega_2),$ we must consider the probability space $\Omega\times\Omega$ with the corresponding product measure $\mu\otimes \mu$. In this case, the product measure is the same as the uniform probability on the set $\Omega\times\Omega$.

The Proposition may be reformulated as ``The area under the ROC curve is the same as the probability that, given an $x\in P$ and a $y\in P^c$, $f(x)>f(y)$." 

If we define \[E=\{(\omega_1,\omega_2)\in\Omega^2:f(\omega_1)>f(\omega_2)\},\] then the probability that, given $x\in P$ and $y\in P^c$, $f(x)>f(y)$ is nothing but the quantity $(\mu\otimes\mu)(E|P\times P^c)$. We come to the reformulation of the Proposition and the corresponding Hypothesis:

\begin{theorem}
	\label{formal statement}
	With all notation as given above, if $f(P)\cap f(P^c)=\emptyset$, then
	\[\int T_f^{\b}d(-F_f)=(\mu\otimes \mu)(E|P\times P^c).\]
\end{theorem}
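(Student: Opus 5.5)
We plan to compute both sides explicitly as finite sums over the values of $f$ and compare them. First we treat the right-hand side. Since $\mu\otimes\mu$ is uniform on $\Omega\times\Omega$, we have
\[(\mu\otimes\mu)(E|P\times P^c)=\frac{\#\{(x,y)\in P\times P^c: f(x)>f(y)\}}{\#P\,\#P^c}=\sum_{y\in P^c}\frac{1}{\#P^c}\cdot\frac{\#\{x\in P: f(x)>f(y)\}}{\#P}.\]
Under the hypothesis $f(P)\cap f(P^c)=\emptyset$, the strict inequality $f(x)>f(y)$ is equivalent to $f(x)\geq f(y)$ for $x\in P$, $y\in P^c$. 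Consequently the inner fraction equals both $T_f(f(y))$ and $T_f^+(f(y))$. We can also regroup by the values $s\in f(P^c)$, which turns the sum into $\sum_{s\in f(P^c)} T_f(s)\,\mu(f^{-1}(\{s\})|P^c)$.

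Next we treat the left-hand side using the appendix notation. By Section~\ref{confusion table}, $d(-F_f)=f_*(\mu_{P^c})$. This is a discrete measure: it assigns mass $\mu(f^{-1}(\{s\})|P^c)$ to each $s\in f(P^c)$ and is supported on that finite set. Hence
\[\int T_f^\b\, d(-F_f)=\sum_{s\in f(P^c)} T_f^\b(s)\,\mu(f^{-1}(\{s\})|P^c),\]
where $T_f^\b(s)=\tfrac12\bigl(T_f(s)+T_f^+(s)\bigr)$ is the balanced (averaged) value, consistent with its use in Theorem~\ref{area underneath the curve}.

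The key step, and the only place the hypothesis enters substantively, is to show $T_f(s)=T_f^+(s)$ for every $s\in f(P^c)$. This follows from the remark in Section~\ref{confusion table} that $T_f$ has jump discontinuities only on $f(P)$. We can also see it directly: $T_f(s)-T_f^+(s)=\mu(f^{-1}(\{s\})\cap P)/\mu(P)$, which vanishes when $s\notin f(P)$. Therefore $T_f^\b(s)=T_f(s)$ on the support of $d(-F_f)$, and the two sums agree term by term, proving the theorem.

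I expect the main obstacle to be bookkeeping rather than conceptual. We must confirm the precise convention for $T_f^\b$ and $T_f^+$ in the appendix, and check that integrating against the pushforward measure reduces to the finite weighted sum. Both checks are routine once we note that the measure is discrete. Without the hypothesis, the same computation leaves a discrepancy of $\tfrac12\sum_s \mu(f^{-1}(\{s\})|P)\mu(f^{-1}(\{s\})|P^c)$, which is the tie term. This would feed naturally into Corollary~\ref{bound}.
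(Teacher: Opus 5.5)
Your proof is correct and is essentially the paper's argument written as explicit finite sums: the paper also conditions on $y\in P^c$ (via horizontal sections and the Tonelli formula), identifies the inner count as $T_f^+(f(y))$, pushes forward to get $\int T_f^+\,d(-F_f)$, and then uses the hypothesis only to conclude that $T_f^{\b}=T_f^+$ holds $d(-F_f)$-almost everywhere. Your first use of the hypothesis, passing from strict to non-strict inequality, is not actually needed, because the count always equals $T_f^+(f(y))$. The paper uses this to obtain the unconditional identity for the probability, which then feeds directly into Corollary~\ref{bound}.
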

\begin{proof}
Let us rewrite

\[(\mu\otimes \mu)(E|P\times P^c)=\frac{1}{\mu(P)\mu(P^c)}\int\mu\left((E\cap (P\times P^c))^y\right)\mu(y),\] where the equality utilizes the definition of the probability of $E $ given $P\times P^c$ and equation (\ref{tonelli lemma}) in the appendix. Now,

\[(E\cap(P\times P^c))^y=\begin{cases}
	\emptyset & y\notin P^c\\
	E^y\cap P & y\in P^c
\end{cases}\]
so that the last expression in the previous equation is nothing but

\[\frac{1}{\mu(P)\mu(P^c)}\int_{P^c}\mu(E^y\cap P)\mu(y).\]

Observe that $E^y\cap P=f^{-1}((f(y),1])\cap P$ so the previous integral may be rewritten as

\[\frac{1}{\mu(P^c)}\int_{P^c} T^+(f(y))\mu(y)=\int T^+_f f_*(\mu_{ P^c}),\] where the equality here uses Equations (\ref{substitution}) and  (\ref{given probability}) in the appendix. As noted in Section \ref{confusion table}, $f_*(\mu_{\Omega\setminus P})=d(-F_f)$ so that in total we have shown

\[(\mu\otimes \mu)(E|P\times(\Omega\setminus P))=\int T_f^+ d(-F_f).\]

Under the assumption that $f(P)\cap f(P^c)=\emptyset,$ the equality $T_f^{\b}=T_f^+$ is valid $d(-F_f)$-almost everywhere, whence the theorem is proven.
\end{proof}

Let us introduce a minimum working example to show that we cannot dispense with the Hypothesis, $f(P)\cap f(P^c)=\emptyset$.

Let $f(1)=f(2)=c\in(0,1)$, $\Omega=\{1,2\}$, and $P=\{1\}$. Then, the probability that $f(x)>f(y)$ when $x\in P$ and $y\notin P$ is zero, however, $\roc(0)=(1,1)$ and $\roc(1)=(0,0)$. So, $\roc(\mathbb{R})=\{(0,0),(1,1)\}$ implying that the area beneath the ROC curve is $1/2$. This does not contradict Theorem \ref{formal statement}, since $f(P)\cap f(\Omega\setminus P)=\{c\}\neq\emptyset$.

A simple bound may be placed on the distance between the area beneath the ROC curve and the probability that your binary classifier will be able to differentiate a random positive and random negative observation, given in the following corollary.

\begin{corollary}
	\label{bound}
	If $A$ is the area beneath the ROC curve generated by a binary classifier $f$, and $P$ is the probability that $f$ will differentiate a random positive observation and a random negative observation, and we denote $B=f^{-1}(f(P)\cap f(P^c))$, then
	
	\[0\leq A-P\leq \frac{1}{4}(\mu(B|P)+\mu(B|P^c)).\] 
\end{corollary}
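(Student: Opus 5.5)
The plan is to reuse the computation in the proof of Theorem \ref{formal statement}, which did not use the Hypothesis until its last line. That computation gives $P=(\mu\otimes\mu)(E|P\times P^c)=\int T_f^+\,d(-F_f)$, while Theorem \ref{area underneath the curve} gives $A=\int T_f^{\b}\,d(-F_f)$. Hence
\[A-P=\int \left(T_f^{\b}-T_f^+\right)d(-F_f).\]
Here $T_f^{\b}(t)=\frac{1}{2}(T_f(t)+T_f^+(t))$ is the average of the left and right values, as used in the trapezoid rewriting. So the integrand equals $\frac{1}{2}(T_f(t)-T_f^+(t))$, which is half the jump of $T_f$ at $t$.

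For the lower bound, $T_f$ is non-increasing, so $T_f\geq T_f^+$ everywhere. The integrand is therefore nonnegative and $A-P\geq 0$.

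For the upper bound, I first compute the jump. Since $T_f(t)=\mu(f^{-1}([t,1])|P)$ and $T_f^+(t)=\mu(f^{-1}((t,1])|P)$, we have
\[T_f(t)-T_f^+(t)=\mu(f^{-1}(\{t\})|P),\]
which is nonzero only when $t\in f(P)$. Since $d(-F_f)=f_*(\mu_{P^c})$ is supported on $f(P^c)$, only the finitely many points $t\in f(P)\cap f(P^c)$ contribute. This gives
\[A-P=\frac{1}{2}\sum_{t\in f(P)\cap f(P^c)}\mu(f^{-1}(\{t\})|P)\,\mu(f^{-1}(\{t\})|P^c).\]
Write $a_t$ and $b_t$ for the two factors. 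For $t\in f(P)\cap f(P^c)$ the sets $f^{-1}(\{t\})$ are disjoint and their union is $B$, so $\sum_t a_t=\mu(B|P)$ and $\sum_t b_t=\mu(B|P^c)$.

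The remaining step, which I expect to be the main obstacle (though it is mild), is bounding $\sum a_tb_t$. Each $a_t,b_t\in[0,1]$, so I would use AM--GM, $a_tb_t\leq \frac{1}{4}(a_t+b_t)^2$, together with $a_t+b_t\le 2$, giving $a_tb_t\leq\frac{1}{2}(a_t+b_t)$. That yields
\[A-P\leq \frac{1}{4}\left(\mu(B|P)+\mu(B|P^c)\right).\]
A more direct route is $a_tb_t\leq \min(a_t,b_t)\leq \frac{1}{2}(a_t+b_t)$, which gives the same bound after summing and multiplying by $\frac{1}{2}$.

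Finally, I would check the claim from the introduction that the difference is at most $1/2$. This follows because both conditional measures are at most $1$, and the example preceding the corollary shows the bound is attained, since there $\mu(B|P)=\mu(B|P^c)=1$ and $A-P=1/2$.
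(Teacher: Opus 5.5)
Your proposal is correct and follows the paper's proof essentially line for line. You write $A-P=\int(T_f^{\b}-T_f^+)\,d(-F_f)$ using Theorem \ref{area underneath the curve} and the hypothesis-free part of the proof of Theorem \ref{formal statement}, get nonnegativity from the monotonicity of $T_f$, reduce to $\frac12\sum_{t\in f(P)\cap f(P^c)}a_tb_t$, and bound each product by $\frac12(a_t+b_t)$. The only differences are cosmetic: the paper passes through $a_tb_t\le\frac12(a_t^2+b_t^2)$ where you use AM--GM or $\min(a_t,b_t)$, and you spell out the left-continuity identity $T_f^{\b}-T_f^+=\frac12(T_f-T_f^+)$ and the disjoint decomposition of $B$, which the paper leaves implicit.
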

\begin{proof}
	As was shown in Theorem \ref{area underneath the curve}, $A=\int T_f^{\text{b}}d(-F_f)$. An intermediate step in Theorem \ref{formal statement} shows $P=\int T^+_fd(-F_f)$. Note then that
	
	\[A-P=\int(T_f^{\text{b}}-T_f^+)d(-F_f).\]
	
	Since $T_f$ is nonincreasing, $T^{\text{b}}_f-T_f^+\geq 0$ so that, since $d(-F_f)$ is a positive measure, $0\leq A-P$.
	
	As mentioned in Section \ref{confusion table}, $d(-F_f)$ is supported on $f(P^c)$, and $T_f^{\text{b}}\neq T_f^+$ exactly on $f(P)$, so that
	
	\[\begin{split}A-P&=\int_{f(P)\cap f(P^c)}(T_f^{\text{b}}-T_f^+)d(-F_f)\\&=\frac{1}{2}\sum_{\tau\in f(P)\cap f(P^c)}\mu(f^{-1}(\{\tau\})|P)\mu(f^{-1}(\{\tau\})|P^c).\end{split}\]
	
	For any constants $a,b\in [0,1]$, $ab\leq\frac{1}{2}(a^2+b^2)\leq \frac{1}{2}(a+b)$, so	\[A-P\leq\frac{1}{4}(\mu(B|P)+\mu(B|P^c)).\]
\end{proof}

As the previous example showed, it is possible that $A=1/2$, $P=0$, and $B=\Omega$ so that equality holds in the inequality proven above. In any situation, we have shown that $0\leq A-P\leq 1/2$. 

\section{Historical Context for the Proposition}
\label{history}

Here we recount the arguments contained in \cite{petersonbirdsallfox,greenswets} in the most general framing possible.

The ROC curve in \cite{petersonbirdsallfox}\footnote{Introduced in Section 2.6 of that paper.} comes from assuming two functions  $f_{P},f_{P^c}:\mathbb{R}^n\to[0,\infty)$ exist,\footnote{Introduced in Section 2.3 of that paper.} with $P\subset\mathbb{R}^n$ and $m$ the Lebesgue measure, such that for all events $E\subset\mathbb{R}^n$, \[\mathbb{P}_P(E)=\int_Ef_Pm\text{ and }\mathbb{P}_{P^c}(E)=\int_E f_{P^c}m.\] They interpret $\P_P,\P_{P^c}$ as the conditional probabilities that an event is drawn from $P,P^c$ respectively. They further require in their work that $f_{P^c}>0$, so that 

\[\P_P\ll \P_{P^c}\ll m \text{ and } m\ll \P_{P^c}.\]

Note that the chain rule utilizing the Radon-Nikodym theorem implies

\[f_P=(\P_P/m)=(\P_P/\P_{P^c})(\P_{P^c}/m)=(\P_P/\P_{P^c}) f_{P^c},\] whence $(\P_P/\P_{P^c})=f_P/f_{P^c}$, which they refer to as the \textbf{likelihood} function. Their ROC curve comes in when considering the Neyman-Pearson Criteria for accepting that some events $E\subset\mathbb{R}^n$ come from the population $P$. They show that accepting events based upon the Neyman-Pearson criterion is analogous to, after specifying a threshold $\beta\in(0,\infty)$ and accepting all events of the form

\[A(\beta)=\{\omega \in \mathbb{R}^n:(\P_P/\P_{P^c})(\omega)\geq\beta\},\]
generating the  false positive rate and true positive rate $x,y:[0,\infty)\to[0,1]$ defined by $x(t)=\P_{P^c}(A(t))$ and $y(t)=\P_{P}(A(t))$, respectively. Hence, their ROC curve is traced out by $t\mapsto (x(t),y(t))$. They make the further assumption that $x,y$ are differentiable and through considering weighted combination criteria and Neyman-Pearson Criteria, they conclude the slope of the ROC curve at $(x(t),y(t))$ must be $y'(t)/x'(t)$. The document \cite{petersonbirdsallfox} does not investigate the ROC curve further in ways relevant to this paper.

The text \cite{greenswets} concerns itself with the ROC curve, and makes claims about it in both the discrete and absolutely continuous case. The only arguments contained in the text pertain to the absolutely continuous case and seem informal. Their argument is interpreted as follows,\footnote{See Section 2.6 of their text, utilizing their conception of the ROC curve from 2.2.2.} utilizing the notation above.

They vary a \textbf{decision rule}, $\beta\in (0,\infty)$, and if $(\P_P/\P_{P^c})(\omega)>\beta$ then $\omega$ is considered an element of $P$; thus, they collect all observations
\[A(\beta)=\{\omega\in\mathbb{R}:(\P_P/\P_{P^c})(\omega)>\beta\}\]
 For each decision rule $\beta$, one generates the point $(\P_{P^c}(A(\beta)),\P_{P}(A(\beta)))$ to trace out the ROC curve, much as in \cite{petersonbirdsallfox}. If, for each $\beta\in(0,\infty)$, we denote the point on the ROC curve above by $(x(\beta),y(\beta))$, then we're still assuming $x,y$ are strictly decreasing in $\beta$ from $1$ at $\beta=0$ to $0$ as $\beta\to\infty$ in an absolutely continuous fashion. We know the area beneath the ROC curve with this interpretation is 
 \[\int_0^\infty y(\beta)(-x'(\beta))m(\beta)=-\int_{(0,\infty)} y(t) dx(t)=\int_{(0,\infty)}(1-x(t))d(-y)(t),\] where the first equality is since $x'=(dx/m)$ and the second equality is integration by parts and $dy((0,\infty))=-1$. Now, if we set $g=(\P_P/\P_{P^c})$ note that 
 
 \[x(t)=g_*\P_{P^c}((t,\infty))\text{ and }y(t)=g_*\P_{P}((t,\infty)),\] so
 
 \[1-x(t)=g_*\P_{P^c}((0,t))\text{ and }d(-y)=g_*\P_P\] and hence
 
 \[\int_0^\infty y(\beta)(-x'(\beta))m(\beta)=\int_{(0,\infty)}g_*\P_{P^c}((0,t))(g_*\P_P)(t)=\int_{\mathbb{R}}g_*\P_{P^c}((0,g(r)))\P_P(r),\] or
 
 \[(\P_{P^c}\otimes\P_{P})(\{(s,r)\in \mathbb{R}^2:g(s)<g(r)\}).\] This seems to be an absolutely continuous analogue for the Proposition, though the assumptions about posterior distributions $f_P,f_{P^c}$ are unnecessary in our argument from Section 4.
 
 One may note that in the experiments recorded in \cite{greenswets,petersonbirdsallfox}, it appears as though some assumption is made on the probability distribution functions generating the positive and negative classes, e.g., in \cite{petersonbirdsallfox} they assume $f_P,f_{P^c}$ are Gaussian and in \cite{greenswets} they assume Gaussian or exponential at various times, among other (absolutely) continuous distributions.
 
To ensure sufficient thought is given to the historical assumptions, note that not only is an assumption made that $\P_P$ and $\P_{P^c}$ are absolutely continuous with respect to the Lebsgue measure, but the assumption that the ROC curve is continuous, let alone differentiable, is an additional assumption. For instance, suppose that

\[\P_{P^c}(E)=\int_{E}\frac{1}{2}e^{-|t|}dt\]

and, after setting $f_P(t)=\begin{cases}
	1& |t|<\epsilon\\\frac{(1-2\epsilon)e^{\epsilon}}{2} e^{-|t|}&|t|\geq \epsilon
\end{cases},$

\[\P_P(E)=\int_Ef_P(t)dt\] with $\epsilon\in(0,1/2)$. Then

\[(\P_P/\P_{P^c})(t)=\begin{cases}
2e^{|t|}&|t|<\epsilon\\(1-2\epsilon)e^{\epsilon}&|t|\geq \epsilon.
\end{cases}\] Note that this state of affairs satisfies the hypothesis that $\P_P,\P_{P^c}$ are absolutely continuous with respect to Lebesgue measure, however

\[x(t)=\P_{P^c}((\P_P/\P_{P^c})^{-1}((t,\infty)))\] satisfies $x^-((1-2\epsilon)e^\epsilon)=1$, and $x^+((1-2\epsilon)e^\epsilon)=(1/2)\int_{-\epsilon}^\epsilon e^{-|t|}dt=1-e^{-\epsilon}$. 

With that in mind, the assumptions that benefited practitioners when their observations were distributed as assumed in \cite{greenswets,petersonbirdsallfox} may no longer be valid for the data scientist who wishes to make use of the area under the ROC curve which is generated by some arbitrary binary classifier.

\section*{Appendix}

Here, some mathematical notation is explicitly described and common results from measure theory given. The measure theory may be found in \cite{hewittstromberg, GRW}.

If $g$ is a function from the set $X$ to the set $Y$, we denote this $g:X\to Y$. If $E\subset Y$, then $g^{-1}(E)$ is the set of all elements $x$ from $X$ such that $g(x)\in E$, i.e.,

\[g^{-1}(E)=\{x\in X:g(x)\in E\}.\]

Similarly, if $E\subset X$, then $g(E)$ is the set of all elements $y$ of $Y$ such that, for some $x\in E$, $g(x)=y$, i.e.,

\[g(E)=\{y\in Y:g(x)=y\text{ for some }x\in E\}.\]

If $X$ is a set and $E$ another set, the elements from $X$ which are not in $E$ will be denoted $X\setminus E$, i.e., \[X\setminus E=\{x\in X: x\notin E\}.\] If $X$ is assumed to be the universe of the discourse, then $X\setminus E$ may be denoted $E^c$.

If $E\subset\mathbb{R}$ is nonempty, then the supremum of $E$, a real number or $\infty$, will be denoted $\sup E$.

If $E\subset X\times Y$, a horizontal section of $E$ at $y$ is defined as

\[E^y=\{x\in X:(x,y)\in E\}.\]

If $g$ is a real-valued function defined on the positive measure space $(X,\Sigma,\mu)$, then the integral of $g$ over the set $E\in \Sigma$ with respect to the measure $\mu$ will be denoted $\int_E g\mu$. If $\mu$ is a probability, i.e., $\mu(X)=1$, then the probability of $A\in \Sigma$ given $B\in \Sigma$ will be denoted $\mu(A|B)$ and defined by $\mu(A\cap B)/\mu(B)$.

Given the $g$ above, we will denote by $g_*\mu$ the measure defined, on all subsets $E$ of $\mathbb{R}$ such that $g^{-1}(E)$ is $\mu$ measurable, by $(g_*\mu)(E)=\mu(g^{-1}(E))$.  Finally, if $V\in \Sigma$ and $0<\mu(V)<\infty$, then $\mu_V$ will be the measure defined, on all sets $E$ in $\Sigma$, by $\mu_V(E)=\mu(E\cap V)/\mu(V)$. If $(Y,S,\nu)$ is another positive measure space, we will denote the product measure by $\mu\otimes \nu$. A fact we shall have use of is that, if $E$ is $\mu\otimes\nu$-measurable and $(\mu\otimes \nu)(X\times Y)<\infty$, then
\begin{equation}
	\label{tonelli lemma}
	(\mu\otimes \nu)(E)=\int_Y\mu(E^y)\nu(y).
\end{equation}

If $g:\mathbb{R}\to\mathbb{R}$ is a function, we will write

\[g^+(x)=\lim_{t\downarrow x}g(t)\text{ and }g^-(x)=\lim_{t\uparrow x}g(t)\] whenever these limits exist. If $g$ is a function where these limits exist for every $x\in\mathbb{R}$, the balanced version of the function will be denoted $g^{\text{b}}:\mathbb{R}\to\mathbb{R}$ and is defined by

\[g^{\text{b}}(x)=\frac{g^+(x)+g^-(x)}{2}.\]

If the function $g$ is of bounded variation, then $dg$ will denote the Lebesgue-Stieltjes measure generated by $g$. Recall that if $a,b\in\mathbb{R}$ and $a<b$, that

\[dg(\{a\})=g^+(a)-g^-(a)\text{ and }dg((a,b))=g^-(b)-g^+(a).\]

If $g$ is nonnegative and $\mu$-measurable, then

\begin{equation}
	\label{given probability}
	\int g\mu_V =\frac{1}{\mu(V)}\int_V g\mu;
\end{equation}
furthermore, if $h:\mathbb{R}\to\mathbb{R}$ is $g_*\mu$ measurable and integrable, then $h\circ g$ is $\mu$-measurable and integrable, and
\begin{equation}
	\label{substitution}
	\int h(g_*\mu)=\int (h\circ g) \mu.
\end{equation}

We will have need of a portion of Tonelli's theorem: If $h$ is a nonnegative function defined on $X\times Y$ and measurable with respect to $\mu\otimes \nu$, then \[\int\left(\int h(x,y)\mu(x)\right)\nu(y)=\int\left(\int h(x,y)\nu(y)\right)\mu(x).\]

We'll also have use of the concept of absolute continuity: given two  positive measures $\mu$ and $\nu$ defined on the same $\sigma$-algebra, $\mu$ is absolutely continuous with respect to $\nu$, written $\mu\ll\nu$, if for all events $E$, $\nu(E)=0$ implies $\mu(E)=0$.

Finally, the Radon-Nikodym theorem states that if $\mu\ll \nu$ and each measure is $\sigma$-finite, then there is a function, denoted by $(\mu/\nu)$, such that 

\[\mu(E)=\int_E (\mu/\nu)\nu\]

for each event $E$.

\newpage
\printbibliography

\end{document}